\newtheorem{assumption}{Assumption}
\newtheorem{theorem}{Theorem}
\newtheorem{lemma}{Lemma}
\newtheorem{proposition}{Proposition}
\begin{document}
%
\title{Task Transfer by Preference-Based Cost Learning}
\author{
Mingxuan Jing\thanks{These two authors contributed equally}\footnotemark[2],
Xiaojian Ma\footnotemark[1]\footnotemark[2],
Wenbing Huang\footnotemark[3],
Fuchun Sun\footnotemark[2],
Huaping Liu\footnotemark[2]
\\
\footnotemark[2] Department of Computer Science and Technology, State Key Lab on Intelligent Technology and Systems,\\
National Lab for Information Science and Technology (TNList), Tsinghua University, Beijing 100084, China\\
\footnotemark[3] Tencent AI Lab, Shenzhen, Guangdong, China \\
\texttt{\{jmx16, maxj14\}@mails.tsinghua.edu.cn, \{fcsun, hpliu\}@tsinghua.edu.cn} \\
\texttt{hwenbing@126.com}
}

\maketitle
\begin{abstract}

The goal of task transfer in reinforcement learning is migrating the action policy of an agent to the target task from the source task. Given their successes on robotic action planning, current methods mostly rely on two requirements: exactly-relevant expert demonstrations or the explicitly-coded cost function on target task, both of which, however, are inconvenient to obtain in practice. In this paper, we relax these two strong conditions by developing a novel task transfer framework where the expert preference is applied as a guidance. In particular, we alternate the following two steps: Firstly, letting experts apply pre-defined preference rules to select related expert demonstrates for the target task. Secondly, based on the selection result, we learn the target cost function and trajectory distribution simultaneously via enhanced Adversarial MaxEnt IRL and generate more trajectories by the learned target distribution for the next preference selection. The theoretical analysis on the distribution learning and convergence of the proposed algorithm are provided. Extensive simulations on several benchmarks have been conducted for further verifying the effectiveness of the proposed method.

\end{abstract}

\section{Introduction}\label{intro}

Imitation Learning has become an incredibly convenient scheme to teach robots skills for specific tasks~\cite{wang2017robust,pathak2018zeroshot,daml,third,tcn,feeling}. It is often achieved by showing the robot various expert trajectories of state-action pairs. Existing imitation methods like MAML~\cite{MAML} and One-Shot Imitation Learning~\cite{One-shot} requires perfect demonstrations in the sense that the experts should perform the same as they expect the robot would do. However, this requirement may not always hold since collecting exactly-relevant demonstrations is resource-consuming.

One possible relaxation is assuming the expert to perform a basic task that is related but not necessary the same as the target task (sharing some common features, parts, etc). This relaxation, at the very least, can reduce the human effort on demonstration collecting and enrich the diversity of the demonstrations for task transfer. For example in Figure~\ref{flowchart}, the expert demonstrations contain the agent movements along an arbitrary direction, while the desired target is to move along only one specified direction.

\begin{figure}
\begin{center}
    \includegraphics[width=1.0\linewidth]{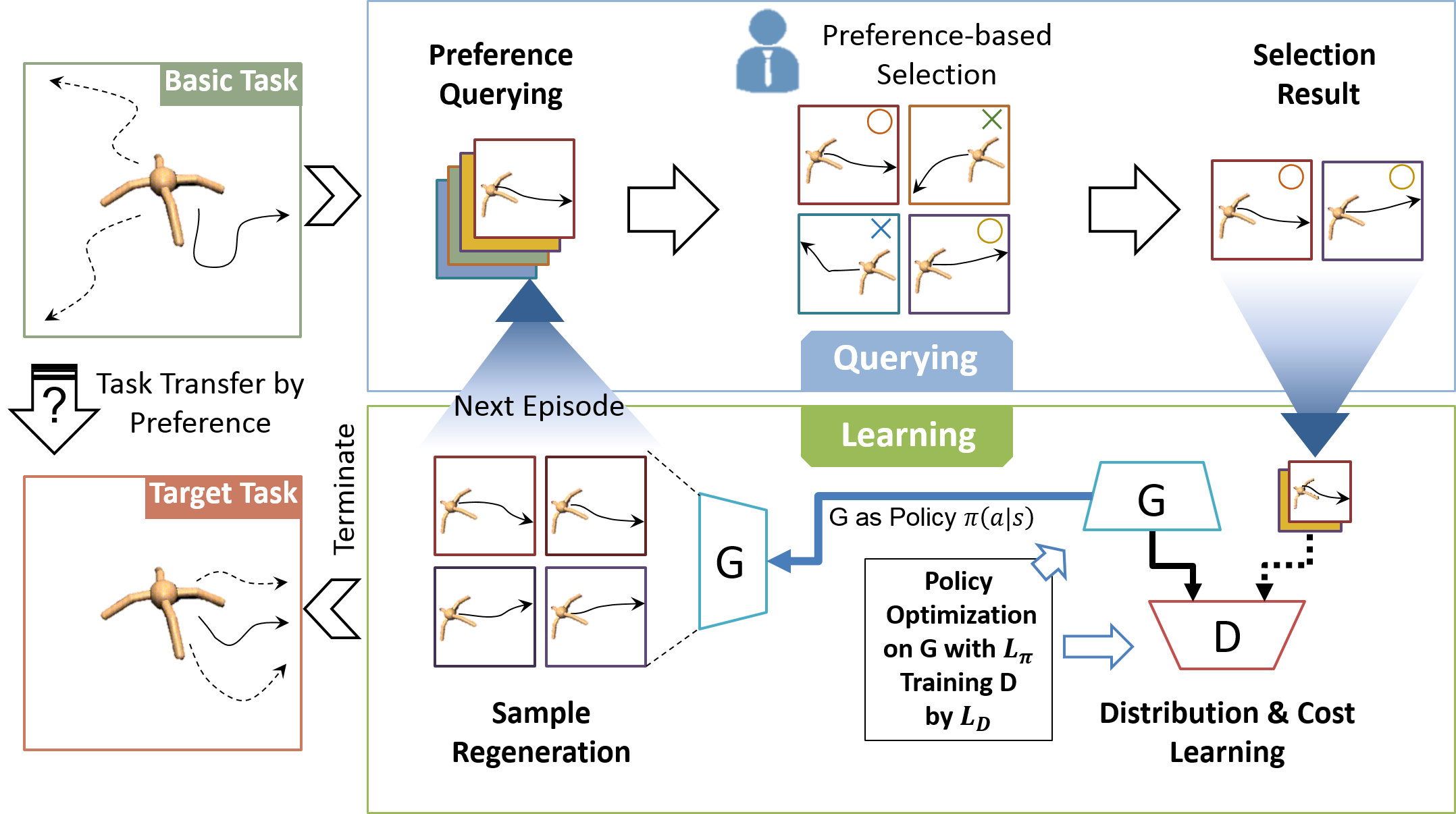}
\end{center}
   \caption{Problem statement and method introduction. As an example, we want to transfer a multi-joint robot from moving towards arbitrary directions (basic task) to moving forward (target task). Our preference-based task transfer framework iterate following two steps. 1. Querying expert for preference-based selection; 2. Learning distribution and cost simultaneously from selected samples, doing policy optimization and re-generating more samples, which would have the same distribution as the selected ones.}
\label{flowchart}
\end{figure}


Clearly, it does not come for free to learn target action policy from the relaxed expert demonstrations. More advanced strategies are required to transfer the action policy from the demonstrations to the target task. The work by~\cite{PBRecoSys} suggests that using experts' preference as a supervised signal can achieve nearly optimal learning result. Here, the preference refers to the highly-abstract evaluation rules or choice tendency of a human for making comparison and selection among data samples. Indeed, the preference mechanism has been applied in many other scenarios, such as complex tasks learning~\cite{PBRL_Survey}, policy updating~\cite{OpenAI_PBIRL}, and policy optimization combing with Inverse Reinforcement Learning (IRL)~\cite{PBIRL_PI} to name a few.


However, previous preference-based methods mainly focus on learning the utility function behind each comparison, where the distribution of trajectories is never studied. However, this would be inadequate for task transfer. The importance of modeling distribution comes from two aspects: 1. Learning the trajectory distribution takes a critical role in preference-based selection, which will be discussed lately; 2. With the distribution, it is more convenient to provide a theoretical analysis of the efficiency and stability of the task transfer algorithm (See Section~\ref{analysis}).

In this work, we approach the task transfer by utilizing the expert preference in a principled way. We first model the preference selection as a rejection sampling where a hidden cost is proposed to compute the acceptance probability. After selection, we then learn the distribution of the target trajectories based on the preferred demonstrations. Since the candidate demonstrations would usually be insufficient after selection, we augment the demonstrations with the samples of the current learned trajectory distribution and perform the preference selection and distribution learning iteratively. The distribution here acts as the knowledge which we make the transfer on. The theoretical derivations prove that it can improve the preference after each iteration and the target distribution will eventually converge. 

As the core of our framework, the trajectory distribution and cost learning are based on but has advanced the Maximum Entropy Inverse Reinforcement Learning (MaxEnt IRL)~\cite{MaxEntIRL} and its adversary version~\cite{CONN_GAN_IRL}. The MaxEnt IRL framework models the trajectory distribution as the exponential of the explicitly-coded cost function. Nevertheless in MaxEnt IRL, computing the partition function requires MCMC or iterative optimization, which is time-consuming and numerically unstable. Hence in adversary MaxEnt IRL, it avoids the computation of the partition function by casting the whole IRL problem into optimization of a generator and a discriminator. Although the adversary MaxEnt IRL is more flexible, it never delivers any form of the cost function, which is crucial for down-stream applications and policy learning. Our method enhances the original adversary MaxEnt IRL by redefining the samples from the trajectory level to the state-action level and devise the cost function using the outputs of the discriminator and generator. With the cost function, we can optimize the generator by any off-the-shelf reinforcement learning method and then the optimal generator could be used as a policy on the target task.

To summarize, our key contributions are as follow.
\begin{enumerate}
    \item We propose to perform imitation learning from related but not exactly-relevant demonstrations by making use of the expert preference-based selection.
    \item We enhance the Adversarial MaxEnt IRL framework for learning the trajectory distribution and cost function simultaneously.
    \item Theoretical analyses have been provided to guarantee the convergence of the proposed task transfer frameworks. Considerable experimental evaluations demonstrate that our method obtains comparable results with other algorithms that require accurate demonstrations or costs.
\end{enumerate}

\section{Preliminaries}\label{PreLim}

This section reviews fundamental conceptions and introduces related works to our method. 
Before further introduction, we first provide key notations used in this paper.

\noindent
\textbf{Notations.}
For modeling the action decision procedure of an agent, The Markov Decision Processes (MDP) without reward $ (\mathcal{S}, \mathcal{A}, \mathcal{T}, \gamma, \mu) $ is used, where $ \mathcal{S} $ denotes a set of states which can be acquired from environment; 
$ \mathcal{A} $ denotes a set of actions controlled by the agent; $ \mathcal{T}=p(s'| s, a) $ denotes the transition probability from state $s$ to $s'$ by action $a$;
$ \gamma \in [0, 1) $ is a discount factor; 
$ \mu $ is the distribution of the initial state $ s_0 $;  
$\pi (a|s)$ defines the policy.
A trajectory is given by the sequence of state-action pairs $\tau_i = \{(s^{(i)}_0, a^{(i)}_0), (s^{(i)}_1, a^{(i)}_1), \cdots\}$. We define the cost function parameterized by $ \theta $ over a s-a pair as $ c_\theta(a, s) $, and define the cost over a trajectory as $ C_\theta(\tau_i) = \sum_t c_\theta(a^{(i)}_t, s^{(i)}_t)$ where $ t $ is time step. A trajectory set is formulated by $n$ expert demonstrations, \emph{i.e.} $B_i = \{\tau_i\}_{i=1}^n$.

\subsection{MaxEnt IRL}\label{PreLimMaxEnt_IRL}
Given a demonstration set $B$, the Inverse Reinforcement Learning (IRL) method~\cite{IRL} seeks to learn optimal parameters $\theta$ of the cost function $C_\theta(\tau_i)$.
The solution could be multiple when using insufficient demonstrations. The MaxEnt IRL~\cite{thesis,relativeMaxEntIRL} handles this ambiguity by training the parameters to maximize the entropy over trajectories, leading to the optimization problem as: 
\begin{align}\begin{split}\label{OPTPROB}
                    &\max_{\theta} -\sum_{\tau}p(\tau)\log p(\tau) \\
    \text{s.t.}~~ & \mathbb{E}_{p(\tau)}[C_\theta(\tau_i)] = \mathbb{E}_{p_E(\tau)}[C_\theta(\tau_i)], \tau_i\in B,\\
                    & \sum_i p(\tau_i)= 1, \quad p(\tau_i)\geq0.
\end{split}\end{align}
Here $p(\tau)$ is the distribution of trajectories; $p_E(\tau)$ is the probability of the expert trajectory; $\mathbb{E}[\cdot]$ computes the expectation.
The optimal $ p(\tau) $ is derived to be the Boltzmann distribution associated with the cost $-C_\theta(\tau)$, namely,
\begin{align}\label{BoltzmannDist}
    p(\tau) = \frac{1}{Z} \exp(-C_\theta(\tau))\text{.}
\end{align}
Here $ Z $ is the partition function given by the integral of $ \exp(-C_\theta(\tau)) $ over all trajectories. 

\subsection{Generative Adversarial Networks}\label{PreLimGAN}

Generative Adversarial Networks (GANs) provides a framework to model a generator $ G $ and a discriminator $ D $ simultaneously.
$ G $ generates sample $ x \sim G(z) $ from noise $ z \sim N(0, I) $ , while $ D $ takes $ x $ as input, and outputs the likelihood value $ D(x) \in [0, 1] $ indicates whether $x$ is sampled from underlying data distribution or from generator~\cite{GAN}

\begin{align}\label{GANLOSS}\begin{split}
    \min_D \mathcal{L}_D =& \mathbb{E}_{x \sim p_{data}}[\log D(x)]\\
    & + \mathbb{E}_{z \sim N(0, I)}[\log (1-D(G(z))]\\
    \min_G \mathcal{L}_G =& \mathbb{E}_{z \sim N(0, I)}[-\log D(G(z))]\\
    & + \mathbb{E}_{z \sim N(0, I)}[\log(1-D(G(z))]\text{.}
\end{split}\end{align}

Generator loss $ \mathcal{L}_{G} $, discriminator loss $ \mathcal{L}_D $ and optimization goals are defined as~\eqref{GANLOSS}. Here $ \mathcal{L}_{G} $ is modified as the sum among logarithm confusion and opposite loss of $ D $ for keeping training signal in case generated sample is easily classified by the discriminator.

\section{Methodology}\label{Metho}

Our preference-based task transfer framework consists of 2 iterative sub-procedures: 1) querying expert preference and construct a selected trajectory samples set; 2) learning the trajectory distribution and cost function from this samples set for re-generating more samples for next episode. Starting from the demonstrations of the basic task, the trajectory distribution and cost function we learned are improved continuously. Finally, with the learned cost function, we can derive a policy of the target task.

The following sections will cover the modeling and analysis for all the two steps mentioned above. In Section~\ref{hidden_cost}, we will introduce the hidden-cost model for modeling the expert preference-based selection. Then in Section~\ref{MethoEAMaxEnt}, our enhanced Adversarial MaxEnt IRL for distribution and cost learning will be presented. We will combine the above two components to develop a preference-based task transfer framework and provide the theoretical analysis on it.

\subsection{Preference-based Sampling and Hidden Cost Model}\label{hidden_cost}

The main idea of our task transfer framework is transferring trajectory distribution with sample selection. Different from other transfer learning algorithms, the selection in our method only depends on preference provided by experts instead of any quantities. The preference of expert here could be abstract conceptions or rules on the performance of agents in target task, which are hard to directly be formalized as cost functions or provided numerically by the expert. In our preference-based cost learning framework, however, we only require experts to choose their most preferred samples among the given set generated on the last step, and try to use the selection result as the guidance on migrating the distribution from current policy to the target task policy.

We migrate the distribution by preference-based selection of samples in current set, the agent should be able to generate feasible trajectories on target task, which requires the probability of a trajectory on current task to be non-zero whenever the probability of that trajectory on target task is non-zero, and there should exist one finite value $M$ (which indicates the expected rejections made before a sample is accepted) that

\begin{align}\label{IncAssump}
    \forall \tau, \exists M \in (0, \infty)~\text{s.t.}~M p(\tau \in B_i) > p(\tau \in B_{tar})\text{,}
\end{align}
\noindent
where $ B_i $ and $ B_{tar} $ are feasible trajectory sets of current task and target task respectively. In previous section, we have shown that under MaxEnt IRL, the expert trajectories are assumed to be sampled from a Boltzmann distribution with negative cost function as energy. For an arbitrary trajectory $ \tau $, there will be

\begin{align}\label{BoltzmannDistTr}\begin{split}
    p(\tau &\in B_i) = p(\tau) = \frac{\exp(-C_i(\tau))}{Z_i} \propto \exp(-C_i(\tau))\\
    p(\tau &\in B_{tar}) = p_{tar}(\tau) = \frac{\exp(-C_{tar}(\tau))}{Z_{tar}}
    \propto \exp(-C_{tar}(\tau))\text{,}
\end{split}\end{align}

\noindent
where $C_i$ and $C_{tar}$ are ground truth costs over a trajectory of current and target task, while $c_i$ and $c_{tar}$ are corresponding cost functions. During selection, we suppose that the expert intends to keep the trajectory $\tau$ which have lower cost value on target task, which means the preference selection procedure could be seen as a rejection sampling over set $B_i$ with acceptance probability

\begin{align}\label{selectProb}\begin{split}
    p_{\text{sel}}(\tau) & = \frac{p_{tar}(\tau)}{M p_i(\tau)} = \frac{Z_i}{M Z_{tar}}\exp\left(C_i(\tau)-C_{tar}(\tau)\right) \\
    &\propto \exp\left(-C_{tar}(\tau)+C_i(\tau)\right)\text{.}
\end{split}\end{align}

We define the gap between target cost and current cost as \textit{hidden cost} $ c_h(s, a) = c_{tar}(s, a) - c_i(s, a) $ and for trajectory $C_h(\tau) = C_{tar}(\tau) - C_i(\tau)$. Thus we can view $C_h$ as a latent factor, or formally, a negative utility function~\cite{APRIL} that indicates the preference and at the same while indicates the gap between target distribution and current distribution. Lower expectation of $C_h$ over the set of samples indicates greater acceptance possibilities and indicated current distribution to be more similar as target one. After each step, by reintroducing the accept rate, the probability of a sample presenting in the set after $i^{th}$ selection should be

\begin{align}\begin{split}
    &p_{i+1}(\tau) = p(\text{selected}(\tau)|\tau) p_i (\tau)\\
    &= \frac{Z_i}{M Z_{tar}}\exp(C_i(\tau)-C_{tar}(\tau)) \frac{1}{Z_i}\exp(-C_i(\tau))\\
    & \propto \exp(-C_{tar}(\tau))\text{.}
\end{split}\end{align}

With preference-based sample selection, the trajectory distribution is expected to approach to the one under the target task finally. The convergence analysis will be provided in Section~\ref{analysis}.

\subsection{Enhanced Adversarial MaxEnt IRL for Distribution and Cost Learning}\label{MethoEAMaxEnt}
In the previous section, we introduce how the preference-based sample selection works in our task transfer framework. However, since the task transfer is an iterative process, we need to generate more samples with the same distribution as the selected samples set to keep it selectable by experts. Additionally, a cost function needs to be extracted from the selected demonstrations to optimize policies. With our enhanced Adversarial MaxEnt IRL, we can tackle these problems by learning the trajectory distribution and unbiased cost function simultaneously. 

Adversarial MaxEnt IRL~\cite{CONN_GAN_IRL} is a recently proposed GAN-based IRL algorithm that explicitly recovers the trajectory distribution from demonstrations. We enhance it to meet the requirements in our task transfer framework. Our enhancement is twofold:
\begin{itemize}
    \item Redefining the GAN from trajectory level to state-action pair level to extract a cost function that can be directly used for policy optimization.
    \item Although the GAN does not directly work on trajectory anymore, we prove that the generator can still be a sampler to the trajectory distribution of demonstrations.
\end{itemize}

We first briefly review the main ideas of Adversarial MaxEnt IRL. In this algorithm, demonstrations are supposed to be drawn from a Boltzmann distribution~\eqref{BoltzmannDist}, and the optimizing target can be regarded as Maximum Likelihood Estimation(MLE) over trajectory set $B$
\begin{align}\label{OptFuncGANIRL}
    \min_{\theta} L_{\text{cost}} = \mathbb{E}_{\tau \sim B}[-\log p_\theta(\tau)]\text{.}
\end{align}
The optimization in~\eqref{OptFuncGANIRL} can be cast into an optimization problem of a GAN~\cite{GAN,CONN_GAN_IRL}, where the discriminator takes the form as followed

\begin{align}\label{DFunc}
    D(\tau) = \frac{p(\tau)}{p(\tau) + G(\tau)} = \frac{\frac{1}{Z}\exp(-C(\tau))}{\frac{1}{Z}\exp(-C(\tau)) + G(\tau)}\text{.}
\end{align}

\citeauthor{CONN_GAN_IRL} showed that, when the model is trained to optimal, the generator $G$ will be an optimal sampler of the trajectory distribution $p(\tau) = \exp(-C(\tau))/Z$. However, we still cannot extract a closed-form cost function from the model. As a result, we enhanced it to meet our requirements.  

Since the cost function should be defined on each state-action pair, we first modified the input of the model in~\eqref{DFunc} from a trajectory to a state-action pair

\begin{align}\label{ModifiedDFunc}
    D(s, a) = \frac{\frac{1}{Z}\exp(-c(s, a))}{\frac{1}{Z}\exp(-c(s, a)) + G(s, a)}\text{.}
\end{align}
\noindent
The connection between the accurate cost $c(s, a)$ and outputs $D(s, a)$, $G(s,a)$ of GANs can be established

\begin{align}\label{estimateR}\begin{split}
&\tilde{c}(s, a) \coloneqq c(s, a) + \log Z\\
&=\log(1 - D(s, a)) - \log D(s, a) - \log G(s,a)\text{.}
\end{split}\end{align}

Here we define $\tilde{c}(s, a) = c(s, a) + \log Z$ as a cost estimator, while $c(s, a)$ is the accurate cost function. Since the partition function $Z$ is a constant while cost function is fixed, it will not affect the policy optimization, which means that $\tilde{c}$ can be directly integrated in common policy optimization algorithms as a unbiased cost function.

Notice that, after this modification, there will be several issues we need to address. Firstly, since the GAN is not defined on trajectory anymore, the equivalence between Guided Cost Learning and GAN training need to be re-verified. We will discuss it in Section~\ref{analysis}. Moreover, it is not straightforward whether $G(s, a)$ is a sampler to the distribution of demonstrations. 

We now show that when $G$ is trained to optimal, the distribution of trajectories sampled from it is exactly the distribution $p(\tau)$ of demonstrations:

\begin{assumption}\label{assumption_env_fixed}
The environment is stationary.
\end{assumption}

\begin{lemma}\label{lemma_expert_policy}
    Suppose that we have an expert policy $\pi_E(a|s)$ to produce demonstrations $B$, a trajectory $\tau = \{(s_0, a_0), (s_1, a_1), \cdots\}$ is sampled from $\pi_E$. Then $\tau$ will have the same probability as drawn from $p(\tau)$ if Assumption~\ref{assumption_env_fixed} holds ($p(\tau)$ is the trajectory distribution of $B$).
\end{lemma}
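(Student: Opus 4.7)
The plan is to show that the measure on trajectories induced by rolling out $\pi_E$ in the MDP coincides, factor by factor, with the measure $p(\tau)$ that generated the demonstration set $B$. Since both processes share the same initial distribution $\mu$, the same transition kernel $\mathcal{T}$, and the same action-selection rule $\pi_E$, the two measures must agree pointwise on every trajectory.

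First I would expand the probability of a trajectory $\tau = \{(s_0, a_0), (s_1, a_1), \dots\}$ obtained by sampling $s_0 \sim \mu$, then iteratively drawing $a_t \sim \pi_E(\cdot \mid s_t)$ and $s_{t+1} \sim \mathcal{T}(\cdot \mid s_t, a_t)$. Using the Markov property of the MDP together with Assumption~\ref{assumption_env_fixed} (which guarantees that $\mu$ and $\mathcal{T}$ do not drift between rollouts), the chain rule yields
\begin{equation*}
    p_{\pi_E}(\tau) \;=\; \mu(s_0)\, \prod_{t \geq 0} \pi_E(a_t \mid s_t)\, \mathcal{T}(s_{t+1} \mid s_t, a_t).
\end{equation*}
Assumption~\ref{assumption_env_fixed} is essential at exactly this step: without stationarity, $\mu$ or $\mathcal{T}$ could depend on the rollout index and the product above would fail to define a single distribution over trajectories.

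Second I would identify this factorization with $p(\tau)$. Since each $\tau_i \in B$ is, by hypothesis, generated by executing the same policy $\pi_E$ in the same stationary MDP, the population distribution underlying $B$ admits precisely the same factorization. Hence $p(\tau) = p_{\pi_E}(\tau)$ for every admissible trajectory $\tau$, which is the claim.

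The main conceptual hurdle, rather than a technical one, is to pin down the reading of $p(\tau)$: the lemma should be interpreted as an equality of the underlying generative measures, not of empirical distributions over the finitely many elements of $B$ (the latter would only match on $B$ itself). Once that convention is fixed, the Markov factorization makes the result immediate, and the statement will be used downstream to transfer guarantees about $p(\tau)$ (e.g.\ in the Adversarial MaxEnt IRL analysis of Section~\ref{MethoEAMaxEnt}) into guarantees about the policy $\pi_E$ that produced the demonstrations.
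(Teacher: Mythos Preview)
Your proposal is correct and follows essentially the same route as the paper: both factor the trajectory probability via the Markov chain rule into initial-state, policy, and transition terms, then invoke the stationarity assumption to identify the factors for the rollout distribution with those underlying $B$. Your write-up is slightly cleaner in that you use the MDP's $\mu$ and $\mathcal{T}$ directly (the paper introduces ad hoc symbols $p_s$ and $p_e$ for the same objects) and you make explicit that $p(\tau)$ must be read as the generative measure rather than the empirical distribution on $B$, a point the paper leaves implicit.
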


\begin{proof}
    We first introduce the environment model $p_e(s' | s, a)$ and the state distribution $p_s(s)$. In Reinforcement Learning, environment is basically a condition distribution over state transitions $(s', s, a)$. Thus the probability of a given trajectory $\tau = \{(s_0, a_0), (s_1, a_1) \cdots\}$ will be
    \begin{align}
        p(\tau) = p_s(s_0)\prod_{t = 0} \pi_E(a_t | s_t)p_e(s_{t+1} | s_t, a_t)\text{.}
    \end{align}
    
    Now we sample a trajectory $\tau$ with $\pi_E$ by executing roll-outs. Under Assumption~\ref{assumption_env_fixed}, the environment model $p_e$ for sampling $\tau$ from $\pi_E$ will be the same as sampling the demonstrations $B$, while $p_s(s) = \iint p_e(s'|s, a)\,ds\,da$ is also identical. Therefore, the probability of sampling $\tau$ can be derived as
    \begin{align}\label{sampling_prob}
        q(\tau) = p_s(s_0)\prod_{t = 0} \pi_E(a_t | s_t)p_e(s_{t+1} | s_t, a_t)\text{.}
    \end{align}
    It's obvious that $p(\tau) = q(\tau)$.
\end{proof}

\begin{lemma}\label{lemma_gan}
    \cite{GAN} The global minimum of the discriminator objective~\eqref{GANLOSS} is achieved when $p_G = p_{data}$.  
\end{lemma}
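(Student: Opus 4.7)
The plan is to follow the standard two-stage argument from Goodfellow's original GAN paper, which the lemma cites directly. First, I would fix the generator $G$ and find the optimal discriminator in closed form; second, I would plug this optimum back into the objective and show that the resulting functional of $p_G$ is minimized exactly when $p_G = p_{data}$.

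For the first step, I would rewrite the expectation over $z\sim N(0,I)$ in~\eqref{GANLOSS} as an expectation over $x \sim p_G$, so that the relevant part of $\mathcal{L}_D$ takes the form
\begin{equation*}
\int_x \bigl[p_{data}(x)\log D(x) + p_G(x)\log(1 - D(x))\bigr]\,dx .
\end{equation*}
Because the integrand decouples across $x$, pointwise optimization over $D(x)\in(0,1)$ applies. Differentiating $a\log y + b\log(1-y)$ in $y$ and setting the derivative to zero yields the unique maximizer $y^{*} = a/(a+b)$, so the optimal discriminator is
\begin{equation*}
D^{*}(x) = \frac{p_{data}(x)}{p_{data}(x) + p_G(x)} .
\end{equation*}
A brief check of the second-order condition confirms this is a maximum (on the $D$ side of the minimax), which, under the paper's sign convention in~\eqref{GANLOSS}, corresponds to the stationary value we need.

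For the second step, I would substitute $D^{*}$ back into the value function and manipulate the resulting expression into
\begin{equation*}
-\log 4 + \mathrm{KL}\!\left(p_{data}\,\Big\|\,\tfrac{p_{data}+p_G}{2}\right) + \mathrm{KL}\!\left(p_G\,\Big\|\,\tfrac{p_{data}+p_G}{2}\right),
\end{equation*}
which is $-\log 4 + 2\cdot \mathrm{JSD}(p_{data}\Vert p_G)$. Since the Jensen--Shannon divergence is nonnegative and vanishes if and only if the two distributions coincide almost everywhere, the global minimum of the objective (over $G$, with $D$ optimized out) is attained precisely at $p_G = p_{data}$, with minimum value $-\log 4$.

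The calculations are all routine, so there is no real obstacle; the only subtlety worth flagging is that the paper's display~\eqref{GANLOSS} writes both players as minimizers, whereas the cited result is stated for the standard minimax game. I would note this as a sign/convention check at the start of the proof and proceed with the classical argument above, since the lemma is explicitly invoked as the Goodfellow et al.\ result.
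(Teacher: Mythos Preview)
Your proposal is correct and reproduces exactly the argument from the cited Goodfellow et al.\ paper: derive the optimal discriminator $D^{*}(x)=p_{data}(x)/(p_{data}(x)+p_G(x))$ pointwise, substitute back, and recognize $-\log 4 + 2\,\mathrm{JSD}(p_{data}\Vert p_G)$. The present paper does not supply its own proof of this lemma at all---it simply states the result with the citation \cite{GAN} and uses it---so your write-up is precisely the intended external argument, and your remark about the nonstandard $\min_D$ sign convention in~\eqref{GANLOSS} is a fair caveat to include.
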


For a GAN defined on state-action level, with Lemma~\ref{lemma_gan}, $p_G = p_{data} = \pi_E$, $\pi_E$ is the expert policy for producing demonstrations. Then with Lemma~\ref{lemma_expert_policy}, it's obviously that the trajectory sampled with $G(s, a)$ will have the same density as $p(\tau)$, which means that $G(s, a)$ can still be a sampler to the trajectory distribution of demonstrations.

We formulate the minimization of generator loss as a policy optimization problem. We regard the unbiased cost estimator $\tilde{c}(s, a)$ as the cost function instead of $\mathcal{L}_G$ in~\eqref{GANLOSS}, and $G$ as a policy $\pi$. Thus the policy objective will be
\begin{align}\label{loss_pi}\begin{split}
    &\mathcal{L}_\pi = \mathbb{E}_{(s,a) \in B}\left[\log(1-D(s, a)) - \log D(s, a)\right] + \mathcal{H}(\pi)\\
    &\text{where}~ \mathcal{H}(\pi) = \mathbb{E}_{(s,a) \in B}\left[-\log\pi(a|s)\right]\text{.}
\end{split}\end{align}
This is quite similar to the generator objective used by GAIL~\cite{GAIL} but with an extra entropy penalty. We'll compare the performances of cost learning between our method and GAIL in Section~\ref{Exp}.

\subsection{Preference-based Task Transfer}

The entire task transfer framework is demonstrated in Algorithm~\ref{alg1}, which combines the hidden cost model for preference-based selection and enhanced Adversarial MaxEnt IRL for distribution and cost learning. With this framework, a well-trained policy on the basic task can be transferred to target task without accurate demonstrations or cost.

Comparing to Section~\ref{MethoEAMaxEnt}, we adopt a stop condition with $\epsilon$ and $M$ which indicates the termination of the loop, and an extra selection constraint which is observed to be helpful for stability in preliminary experiments. In practice, the parameters of $G_{\phi_i}$ and $D_{\omega_i}$ can be directly inherited from $G_{\phi_{i-1}}$ and $D_{\omega_{i-1}}$ when $i > 1$. Compare to initialize from scratch, this will converge faster in each iteration, while the results remain the same.

\begin{algorithm}[H]
\caption{Preference-based task transfer via Adversarial MaxEnt IRL}  
\begin{algorithmic}[1]
\REQUIRE ~~ \\
Demonstrations set $B_0$ on basic task.\\
Stop indicator $\epsilon$, maximum episode $M$.
Preference rules, or emulators which provides selection results.
\ENSURE ~~ \\
Transferred policy $\pi_{t}$.\\
~\newline
$i = 0$ \\
Initialize generator $G_{\phi_0}$, discriminator $D_{\omega_0}$; 
\REPEAT
\STATE $i \leftarrow i + 1$
\FOR{step $s$ in $\{$1, $\cdots$, N$\}$}
\STATE Sample trajectory $\tau$ from $G_{\phi_i}$; \ 
\STATE Update $D_{\omega_i}$ with binary classification error in~\eqref{GANLOSS} to tell demonstration $\tau^E \in B_{i-1}$ from sample $\tau$;
\STATE Update $G_{\phi_i}$ using any policy optimization method with respect to $\mathcal{L}_\pi$ in ~\eqref{loss_pi};
\ENDFOR  \\
\STATE Sampling with $G_{\phi_i}$, and collect $\tilde{B_i}$;
\STATE Query for preference to select trajectory in $\tilde{B_i}$ to obtain retained samples $B_i$, dropped samples $\overline{B}_i$, and guarantee $|\overline{B}_i|$ is no more than half of $|\tilde{B_i}|$;
\STATE Random sample $\beta |\overline{B}_i|$ trajectories from $\overline{B}_i$ and put them back into $B_i$;
\UNTIL{ $| \overline{B}_i | / | \tilde{B}_i| < \epsilon$  or $i = M$}
\label{Alog1} 
\RETURN $\pi_{t} \leftarrow G_{\phi_i}$
\end{algorithmic}
\label{alg1}
\end{algorithm}  

\subsection{Theoretical Analysis}\label{analysis}
In this section, we will discuss how can our framework learn the distribution from trajectories in each episode and finally transfer the cost function to target task. Remember the core part in our framework: Transferring the trajectory distribution from $p_0$ to $p_{tar}$. There is a finite loop in this process, during which we query for preference as $p_{\text{sel}}(\tau) \propto \exp \{-C_h(\tau)\}$ and improve the distribution $p_i$ for each episode $i$. If the distribution improves monotonically and the improvement can be maintained, we can guarantee the convergence of our method, which means that $p_{tar}$ can be learned. Then the cost function $c_i$ we learned together with $p_i$ will also approach to the cost for target task $c_{tar}$. This intuition is shown as following: 

\begin{proposition}\label{proposition1}
Given a finite set of trajectories $B$ sampled from distribution $p(\tau)$ and an expert with select probability~\eqref{selectProb}, the hidden cost over a trajectory $\mathbb{E}_{\tau \sim p}[-C_h(\tau)]$ is improved monotonically after each selection.
\end{proposition}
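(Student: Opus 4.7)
My plan is to reduce the claim to a single monotone-correlation inequality. First, I would translate the preference-based rejection step into an explicit formula for the post-selection distribution: since each $\tau \sim p_i$ is retained with probability proportional to $\exp(-C_h(\tau))$, the law of a retained trajectory is
\begin{align*}
p_{i+1}(\tau) \;=\; \frac{p_i(\tau)\,\exp(-C_h(\tau))}{\mathbb{E}_{\tau\sim p_i}\!\bigl[\exp(-C_h(\tau))\bigr]},
\end{align*}
which is precisely the update already derived in Section~\ref{hidden_cost}. Monotonic improvement of $\mathbb{E}_{\tau\sim p}[-C_h(\tau)]$ is therefore equivalent to the inequality $\mathbb{E}_{p_{i+1}}[C_h] \leq \mathbb{E}_{p_i}[C_h]$.

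Next, I would express the gap between the two expectations as a normalized covariance under $p_i$:
\begin{align*}
\mathbb{E}_{p_{i+1}}[C_h] - \mathbb{E}_{p_i}[C_h] \;=\; \frac{\mathrm{Cov}_{p_i}\!\bigl(C_h(\tau),\,\exp(-C_h(\tau))\bigr)}{\mathbb{E}_{p_i}[\exp(-C_h(\tau))]}.
\end{align*}
The denominator is strictly positive, so the sign of the difference is controlled entirely by the covariance in the numerator. Because $x \mapsto \exp(-x)$ is strictly decreasing, $C_h(\tau)$ and $\exp(-C_h(\tau))$ are anti-monotone functions of the same randomness. Applying Chebyshev's sum inequality, or equivalently taking the expectation of the pointwise bound $(C_h(\tau_1)-C_h(\tau_2))\bigl(\exp(-C_h(\tau_1))-\exp(-C_h(\tau_2))\bigr) \leq 0$ over two independent copies $\tau_1,\tau_2 \sim p_i$, yields $\mathrm{Cov}_{p_i}(C_h,\exp(-C_h)) \leq 0$, with equality iff $C_h$ is $p_i$-almost surely constant. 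This gives the desired monotonic improvement.

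The main obstacle I anticipate is conceptual rather than computational. The proposition is phrased on a finite sample set $B$, but a single finite-sample selection step is itself random, so ``improved monotonically'' only becomes an unambiguous scalar inequality once it is interpreted at the level of the underlying law $p_i$ governing $B$. I would therefore open the writeup by explicitly declaring $p_{i+1}$ to be the distribution of a retained trajectory (i.e.\ the law of the elements of $B_{i+1}$ in the notation of Algorithm~\ref{alg1}), after which the covariance step above applies pointwise to that law and no further probabilistic machinery is required; the finite-sample version then follows by the usual appeal to consistency of sample expectations.
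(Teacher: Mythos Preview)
Your argument is correct. The paper's own treatment is only a sketch: it writes the pre- and post-selection expectations as finite-sample sums,
\[
\frac{1}{|B|}\sum_i(-C_h(\tau_i))
\quad\text{versus}\quad
\frac{\sum_i p_{\text{sel}}(\tau_i)(-C_h(\tau_i))}{\sum_i p_{\text{sel}}(\tau_i)},
\]
and then asserts that ``under linear expansion of cost'' the second dominates the first, without carrying out the comparison. Your route is different and cleaner: by recognizing the gap as $\mathrm{Cov}_{p_i}(C_h,\exp(-C_h))$ divided by a positive normalizer, you obtain the inequality exactly from the anti-monotonicity of $x\mapsto e^{-x}$, with no Taylor/linearization step and with an explicit equality case ($C_h$ constant under $p_i$). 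This buys you a statement that holds for any distribution $p_i$, not merely to first order, and it applies verbatim to the finite empirical measure on $B$ as well, so the finite-sample and population versions are handled simultaneously. Your closing remark about the ambiguity between the random finite-sample selection and the induced law $p_{i+1}$ is well-taken; the paper implicitly makes the same identification when it passes from $B$ to $p'$, so stating it up front, as you propose, is an improvement in clarity rather than a deviation from the intended argument.
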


This proposition can be proved with some elementary 
derivations. Here we only provide the proof sketch. Since all the trajectories in $B$ are sampled from corresponding distribution $p(\tau)$, the expect cost can be estimated. Notice that we use a normalized select probability $p_{\text{sel}}(\tau) = \exp(-C_h(\tau))/Z$. Thus the estimations of expectation before and after the selection will be
\begin{align}\begin{split}
& \mathbb{E}_{\tau \sim p}[-C_h(\tau)] \approx \frac{1}{|B|}\sum\limits_{i = 0}^{|B|}(-C_h(\tau_i)) \\ 
& \mathbb{E}_{\tau \sim p'}[-C_h(\tau)] \approx \sum\limits_{i=0}^{|B|}p_{\text{sel}}(\tau_i)(-C_h(\tau_i))~/~\sum\limits_{i = 0}^{|B|}p_{\text{sel}}(\tau_i)\text{.}
\end{split}\end{align}

Obviously, trajectories after selection can not be seen as samples drawn from $p$, here we use $p'$, which can be regarded as an \textit{improved} $p$. Under linear expansion of cost, $\mathbb{E}_{\tau \sim p'}[-C_h(\tau)] \geq \mathbb{E}_{\tau \sim p}[-C_h(\tau)]$ can be proved. Thus the expect cost over a trajectory is improved monotonically.

Then we need to re-verify that whether the proposed state-action level GAN in our enhanced Adversarial MaxEnt IRL is still equivalent to Guided Cost Learning~\cite{GCL}:  
\begin{theorem}\label{theorem_gan_gcl_1}
Suppose we have demonstrations $B = \{\tau_0, \tau_1, \cdots \}$, a GAN with generator $G_\phi(s, a)$, discriminator $D_{\omega}(s, a)$. Then when the generator loss $\mathcal{L}_{G} = \mathbb{E}_{\tau \sim p}[\log (1- D_{\omega}(s, a)) - \log (D_{\omega}(s, a))]$ is minimized, the sampler loss in Guided Cost Learning~\cite{GCL} $\mathcal{L}_{sampler} = D_{KL}(q(\tau) ~||~ \exp (-C(\tau))/Z)$ is also minimized. $q(\tau)$ is the learned trajectory distribution, and $G_\phi$ is corresponding sampler.
\end{theorem}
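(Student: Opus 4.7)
The plan is to show that $\mathcal{L}_G$ and $\mathcal{L}_{sampler}$ differ only by terms independent of the generator parameters $\phi$, so they share the same minimizer in $\phi$. First I would expand
\[
\mathcal{L}_{sampler} = \mathbb{E}_{q(\tau)}[\log q(\tau)] + \mathbb{E}_{q(\tau)}[C(\tau)] + \log Z.
\]
Under Assumption~\ref{assumption_env_fixed}, the trajectory factorization used in the proof of Lemma~\ref{lemma_expert_policy} gives $\log q(\tau) = \log p_s(s_0) + \sum_t \log \pi_\phi(a_t|s_t) + \sum_t \log p_e(s_{t+1}|s_t,a_t)$, and by definition $C(\tau) = \sum_t c(s_t,a_t)$. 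The environment factors $p_s, p_e$ and the constant $\log Z$ do not depend on $\phi$, so modulo a $\phi$-independent additive constant
\[
\mathcal{L}_{sampler} \equiv \mathbb{E}_{q}\!\left[\sum_t \log \pi_\phi(a_t|s_t) + \sum_t c(s_t,a_t)\right].
\]

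Next I would rewrite $\mathcal{L}_G$ using the cost-estimator identity~\eqref{estimateR}, namely $\log(1-D(s,a)) - \log D(s,a) = c(s,a) + \log Z + \log G_\phi(s,a)$, and interpret $G_\phi$ as the policy $\pi_\phi$. Expanding the trajectory-level expectation as a sum of per-step state-action contributions along $\tau \sim q$ yields
\[
\mathcal{L}_G = \mathbb{E}_{\tau \sim q}\!\left[\sum_t \bigl(c(s_t,a_t) + \log \pi_\phi(a_t|s_t)\bigr)\right] + T\log Z,
\]
which matches the reduced form of $\mathcal{L}_{sampler}$ up to a $\phi$-independent additive constant. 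Hence any $\phi$ that minimizes $\mathcal{L}_G$ also minimizes $\mathcal{L}_{sampler}$, and vice versa.

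The main obstacle is justifying the use of identity~\eqref{estimateR} during the optimization itself, since that identity presupposes $D_\omega$ has been driven to its own optimum given the current $G_\phi$. I would close this gap by invoking the standard alternating GAN training schedule together with Lemma~\ref{lemma_gan}: with $D_\omega$ at its optimum, the state-action density of $G_\phi$ matches $p_{data}(s,a)$, and Lemma~\ref{lemma_expert_policy} upgrades this state-action matching to trajectory-level matching, so the two expectations $\mathbb{E}_q[\cdot]$ in the derivations are taken against the same measure. A minor bookkeeping subtlety is that $\log Z$ enters once on the sampler side but $T$ times on the generator side; either way it only shifts the additive constant and does not move the minimizer.
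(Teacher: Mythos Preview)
The paper does not include a proof of Theorem~\ref{theorem_gan_gcl_1} in the main text; it simply states the result and proceeds. Your core derivation---expanding the KL divergence, stripping off the $\phi$-independent environment factors via the trajectory factorization of Lemma~\ref{lemma_expert_policy}, and matching the result term-for-term against $\mathcal{L}_G$ rewritten through~\eqref{estimateR}---is exactly the argument one expects here and is the state-action-level analogue of the trajectory-level equivalence in the cited work~\cite{CONN_GAN_IRL}. That part is correct and complete.

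Your last paragraph, however, manufactures an obstacle that is not there and then resolves it with a mistaken appeal to Lemma~\ref{lemma_gan}. Identity~\eqref{estimateR} is not an optimality condition: it is an \emph{algebraic} consequence of the discriminator parameterization~\eqref{ModifiedDFunc}. From~\eqref{ModifiedDFunc} one has $(1-D)/D = G\,Z\exp(c)$ identically, for every value of the discriminator parameters $\omega$ and generator parameters $\phi$, so~\eqref{estimateR} holds throughout training, not just at the discriminator optimum. There is nothing to close. Separately, your invocation of Lemma~\ref{lemma_gan} is inverted: that lemma says that at the \emph{global} GAN optimum $p_G=p_{data}$; it does not say that fixing $G$ and optimizing $D$ alone forces $p_G=p_{data}$ (the Bayes-optimal $D$ for a fixed $G$ is $p_{data}/(p_{data}+p_G)$, which does not constrain $G$). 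Drop this paragraph entirely; the proof is already finished once you have shown $\mathcal{L}_G$ and $\mathcal{L}_{sampler}$ differ by a $\phi$-independent constant.
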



Since the $\mathcal{L}_{sampler}$ is minimized along with $\mathcal{L}_{G}$, when the adversarial training ends, an optimal sampler of $q(\tau)$ can be obtained. Now we need to prove $B$ is drawn from $q(\tau)$:

\begin{theorem}\label{theorem_gan_gcl_2}
Under the same settings in Theorem~\ref{theorem_gan_gcl_1}, when the discriminator loss is minimized, the cost loss in Guided Cost Learning $\mathcal{L}_{cost} = \mathbb{E}_{\tau \sim B}[C_\theta(\tau)] + \mathbb{E}_{\tau \sim G}[\exp (-C_\theta(\tau))/q(\tau)]$ is also minimized. Thus the learned cost $C_\theta(\tau)$ is optimal for $B$. Refer to Theorem~\ref{theorem_gan_gcl_1}, $B$ is drawn from $q(\tau)$. 
\end{theorem}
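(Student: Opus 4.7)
\bigskip
\noindent\textbf{Proof proposal.}
The plan is to mirror the argument of Finn et al.\ for GAN-GCL, but adapted to the state--action formulation adopted in this paper. I would start from the discriminator loss in~\eqref{GANLOSS} and substitute the specific parameterisation of $D$ given in~\eqref{ModifiedDFunc}, treating the normaliser $Z$ and the cost parameters $\theta$ jointly as trainable quantities of the discriminator, while regarding $G_\phi(s,a)$ (which by Theorem~\ref{theorem_gan_gcl_1} and Lemma~\ref{lemma_expert_policy} is an optimal sampler of the learned trajectory distribution $q(\tau)$) as fixed for the purpose of the cost update. The goal is to match $\nabla_\theta \mathcal{L}_D$ against $\nabla_\theta \mathcal{L}_{cost}$ and thereby identify the two stationary-point conditions.

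First I would expand
\begin{align*}
\mathcal{L}_D &= \mathbb{E}_{(s,a)\sim B}\bigl[-\log D_\omega(s,a)\bigr] + \mathbb{E}_{(s,a)\sim G}\bigl[-\log(1-D_\omega(s,a))\bigr] \\
&= \mathbb{E}_{(s,a)\sim B}\bigl[c_\theta(s,a) + \log Z\bigr] + \text{terms independent of } \theta \text{ at fixed } G,
\end{align*}
up to the mixture denominator $\tfrac{1}{Z}\exp(-c_\theta)+G$, whose $\theta$-derivative carries a factor $\tfrac{1}{Z}\exp(-c_\theta)/(\tfrac{1}{Z}\exp(-c_\theta)+G)$. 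Next, I would lift to trajectory level: summing across a roll-out converts $c_\theta(s,a)$ into $C_\theta(\tau)$, and by Assumption~\ref{assumption_env_fixed} together with Lemma~\ref{lemma_expert_policy} the initial-state and transition factors appearing in the expert and generator densities cancel, so that the expectation $\mathbb{E}_{(s,a)\sim G}$ can be rewritten as $\mathbb{E}_{\tau\sim G}$ with density $q(\tau)$. The term $\log Z$ is then matched by treating $Z$ as the importance-weighted partition estimator $Z = \mathbb{E}_{\tau\sim G}[\exp(-C_\theta(\tau))/q(\tau)]$; setting $\partial \mathcal{L}_D/\partial Z = 0$ recovers exactly this optimal $Z$, and substituting it back into $\mathcal{L}_D$ yields the GCL objective
\begin{align*}
\mathcal{L}_{cost} = \mathbb{E}_{\tau\sim B}[C_\theta(\tau)] + \mathbb{E}_{\tau\sim G}\bigl[\exp(-C_\theta(\tau))/q(\tau)\bigr]
\end{align*}
up to additive constants in $\theta$. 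Hence $\nabla_\theta \mathcal{L}_D = \nabla_\theta \mathcal{L}_{cost}$, and a stationary point of one is a stationary point of the other. Combined with Theorem~\ref{theorem_gan_gcl_1}, which identifies $B$ with samples from $q(\tau)$, this gives the optimality of $C_\theta$ on $B$.

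The main obstacle I anticipate is the lift from the state--action-level discriminator back to a trajectory-level GCL statement. In the original GAN-GCL argument the discriminator already sees whole trajectories, so the reduction is immediate; here $D_\omega$ and $G_\phi$ only see single pairs, and the identification with $q(\tau)$ requires one to cancel the environment-induced factors $p_e(s_{t+1}|s_t,a_t)$ and the initial-state factor $p_s(s_0)$ between expert and generator roll-outs. This is precisely where Assumption~\ref{assumption_env_fixed} and Lemma~\ref{lemma_expert_policy} are needed, and I would be careful to invoke them explicitly when converting $\mathbb{E}_{(s,a)}$ expectations to $\mathbb{E}_{\tau}$ expectations before comparing gradients. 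Once that conversion is justified, the remainder of the proof is essentially the same algebraic matching as in Finn et al.
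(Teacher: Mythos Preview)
The paper does not actually supply a detailed proof of this theorem: after stating it, the authors only remark that ``MaxEnt IRL is regarded as a MLE of~\eqref{BoltzmannDist}, while the unknown partition function $Z$ needs to be estimated,'' and then assert the equivalence. Your proposal is precisely the argument that this one-line gloss is pointing at --- the gradient-matching computation of Finn et al.\ (\cite{CONN_GAN_IRL}), carried out with the discriminator parameterisation~\eqref{ModifiedDFunc}, and then lifted from state--action pairs to trajectories using Assumption~\ref{assumption_env_fixed} and Lemma~\ref{lemma_expert_policy}. So your approach is consistent with, and more explicit than, what the paper itself provides.

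One small comment on execution: your displayed expansion of $\mathcal{L}_D$ is a little misleading as written, since the generator-side term $\mathbb{E}_{(s,a)\sim G}[-\log(1-D_\omega(s,a))]$ is \emph{not} independent of $\theta$ at fixed $G$ (the denominator $\tfrac{1}{Z}\exp(-c_\theta)+G$ depends on $\theta$), and moreover both expectations share that denominator. You clearly know this, because you immediately mention the mixture-denominator derivative; but when you write the proof out in full you should keep both terms explicit, differentiate, and show that the two $\theta$-gradients combine into the importance-weighted estimator $\nabla_\theta\big(\mathbb{E}_{\tau\sim B}[C_\theta(\tau)]+\log\mathbb{E}_{\tau\sim G}[\exp(-C_\theta(\tau))/q(\tau)]\big)$, exactly as in Finn et al. The step you flag as the main obstacle --- justifying the passage from $\mathbb{E}_{(s,a)}$ to $\mathbb{E}_{\tau}$ by cancelling the shared $p_s(s_0)\prod_t p_e(s_{t+1}\mid s_t,a_t)$ factors under Assumption~\ref{assumption_env_fixed} --- is indeed the only place where this paper's state--action formulation departs from the original trajectory-level argument, and invoking Lemma~\ref{lemma_expert_policy} there is the right move.
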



In Theorem~\ref{theorem_gan_gcl_2}, MaxEnt IRL is regarded as a MLE of ~\eqref{BoltzmannDist}, while the unknown partition function $Z$ needs to be estimated. Therefore, training a state-action level GAN is still equivalent to maximizing the likelihood of trajectory distribution. Thus we can learn the optimal cost function and distribution under the current trajectory set $B$ at the same time. 

    With Proposition~\ref{proposition1}, we can start from an arbitrary trajectory distribution $p_0$ and trajectory set $B_0$ drawn from it. Then we can define a \textit{trajectory distribution iteration} as $p_{i+1}(\tau) \propto p_{i}(\tau)\exp\{-C_h(\tau)\}$~\cite{softQ}. Then expected hidden cost over a trajectory $\mathbb{E}_{\tau \sim p_i}[-C_h(\tau)]$ improves monotonically in each episode.
With Theorem~\ref{theorem_gan_gcl_1}, \ref{theorem_gan_gcl_2}, by strictly recovering the $improved$ distribution as $p_{i+1}$ from trajectory set (after selection), our algorithm can guarantee to maintain the improvement of expect cost over a trajectory to next episode. 
\begin{figure*}[t]
\centering
    \includegraphics[width=0.93\linewidth]{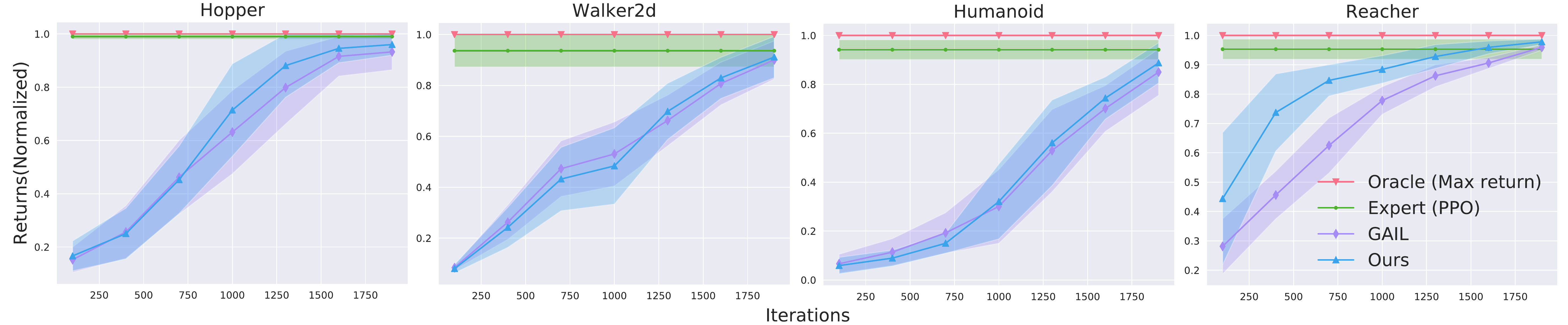}
    \caption{Results of distribution learning under four MuJoCo environments. Here the demonstrations are provided by an expert policy (PPO) under a known cost function. We compare the average cost value among trajectories generated by an \textit{oracle} (an ideal policy that always obtains maximum return), PPO (sample generator, acts as the expert), GAIL (a state-of-the-art IRL algorithm) and our distribution learning method. The results show that our method can finally achieve nearly the same performance as the expert. As we discussed in Section~\ref{ExpOverview}, we can verify that our method can learn the distribution from demonstrations.} 
    \label{ex1}
\end{figure*}

\begin{figure}[!ht]
\begin{center}
    \hfill\includegraphics[width=0.45\linewidth]{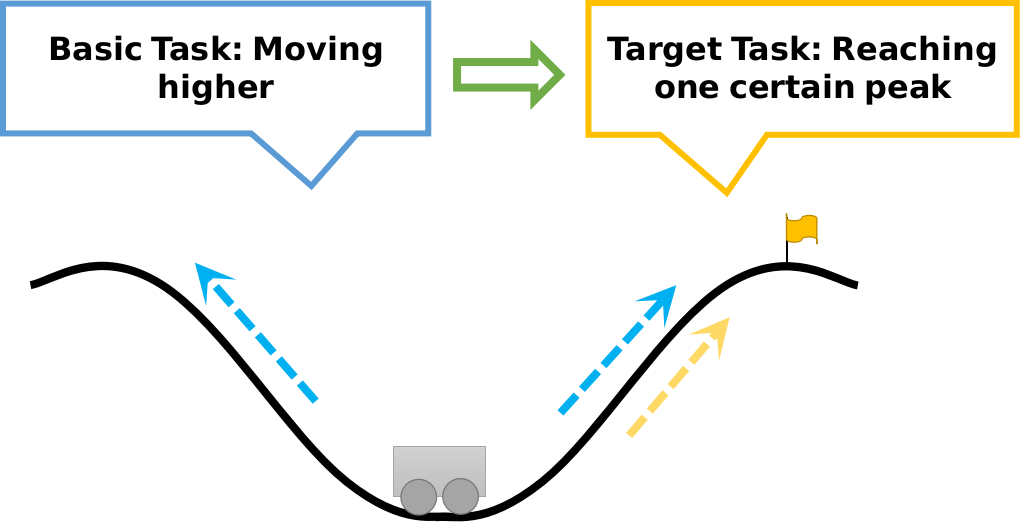}\hfill
    \includegraphics[width=0.45\linewidth]{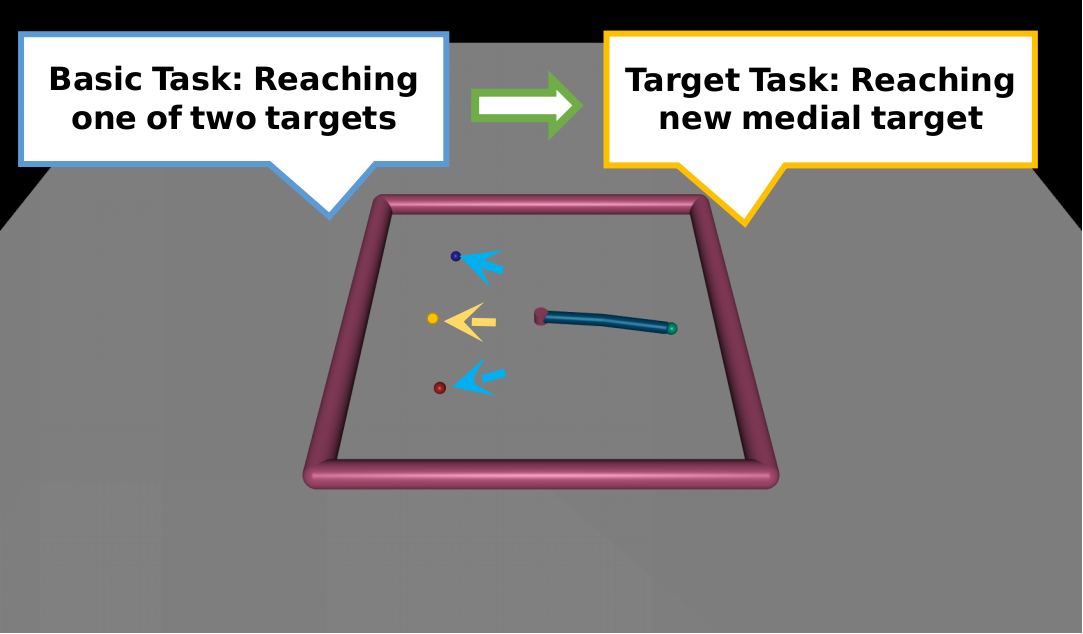}\hfill
    \includegraphics[width=1.0\linewidth]{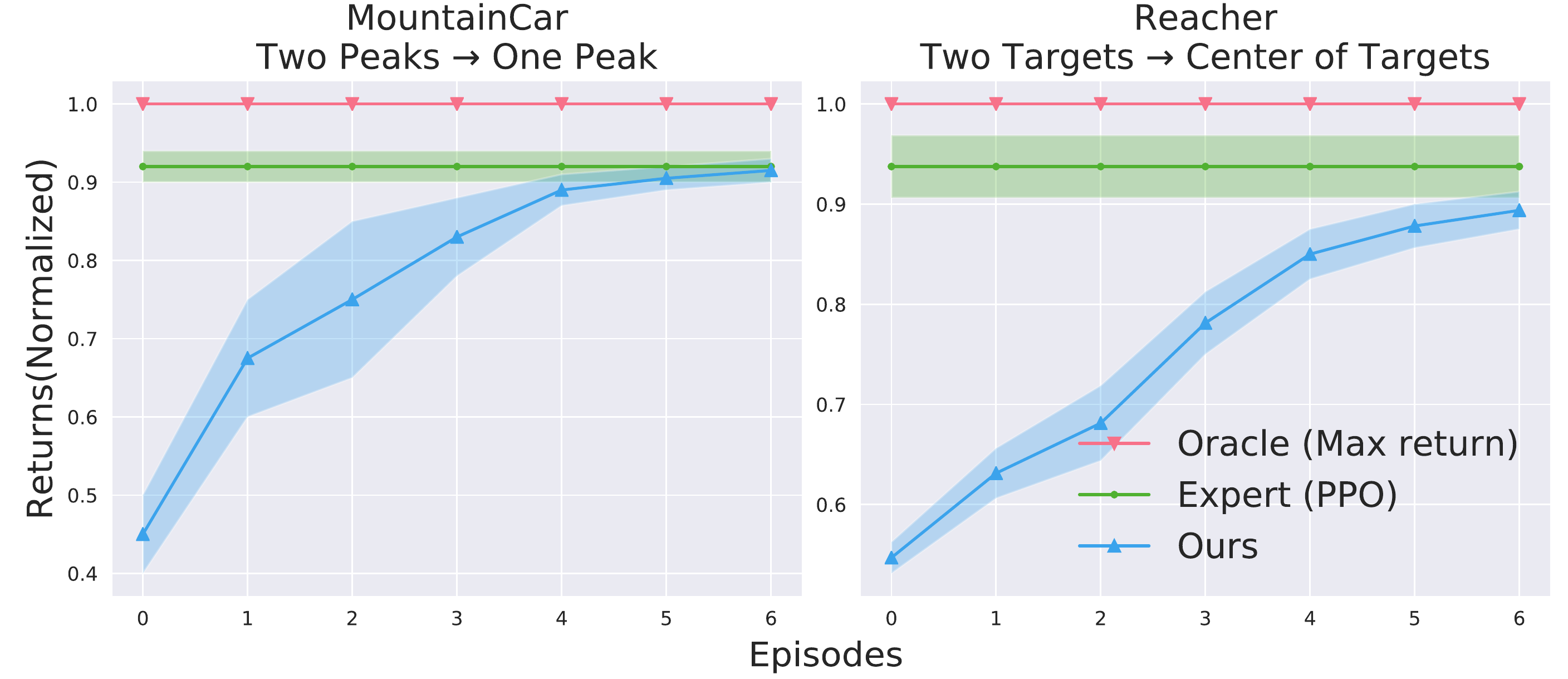}
\end{center}
   \caption{Results of cost learning and task transfer. We compare the average returns among an \textit{oracle} (an ideal policy that always obtains maximum return), an expert policy trained with the cost of target task, and our method. The results show that our algorithm can adapt to new task efficiently within $4 \sim 6$ episodes, and achieves nearly the same performance as the expert.}
\label{ex2}
\end{figure}

Under certain regularity conditions~\cite{softQ}, $p_i$ converges to $p_{\infty}$.
For trajectories that sampled from the target distribution $p_{tar}(\tau)$, their corresponding select probability~\eqref{selectProb} will approach to $1$. Thus $p_{tar}(\tau) = \exp(-C_h(\tau)-C_b(\tau))/Z$ can be a fixed point of this iteration when the iteration starts from $p_b(\tau) = \exp (-C_b(\tau))/Z$. Since all the non-optimal distribution can be improved this way, the learned distribution will converge to $p_{tar}(\tau)$ at infinity. As we have showed before, with a limited demonstrated trajectories $B$ sampled from arbitrary trajectory distribution $p(\tau)$, an optimal cost $c(a, s)$ can be extracted through our enhanced Adversarial MaxEnt IRL proposed in Section~\ref{MethoEAMaxEnt}. Therefore, the target cost $c_{tar}$ can also be learned from transferred distribution $p_{tar}$.

\section{Experiments}\label{Exp}

We evaluate our algorithm on several control tasks in MuJoCo~\cite{mujoco} physical simulator with pre-defined ground-truth cost function $c_b(s, a)$ on basic tasks and $c_{tar}(s, a)$ on target tasks in each experiments, $C_b(\tau)$ and $C_{tar}(\tau)$ are accumulated costs over trajectory $\tau$ for basic and target task respectively. All the initial demonstrations are generated by a well-trained PPO using $c_b$, and during the transfer process, preference is given by emulator with negative utility function (or hidden cost over a trajectory) $C_h = C_{tar} - C_b$. The select probability follows the definition in~\eqref{selectProb}. For performance evaluation, we use averaged return with respect to $c_{tar}(s, a)$ as the criterion.

\subsection{Overview}\label{ExpOverview}

In experiments, we mainly want to answer three questions:
\begin{enumerate}
\item During the task transfer procedure, can our method recover the trajectory distribution from demonstrations in each episode?
\item Starting from a basic task, can our method finally transfer to the target task and learn the cost function of it?
\item Under the same task transfer problem, can our method (based on preference only) obtain a policy with comparable performance, compared to other task transfer algorithms (based on accurate cost or demonstrations)?  
\end{enumerate}

To answer the first question, we need to verify the distribution learning part in our method functionally. Since our enhanced Adversarial MaxEnt IRL is built upon MaxEnt IRL, the recovered trajectory distribution can be reflected as a cost function, and the trajectories we learn from being generated by the optimal policy under that cost. Intuitively, given the expert trajectories $\tau_{\text{PPO}} $ generated by PPO and its corresponding cost $C_{tar}(\tau_{\text{PPO}})$, if we can train a policy which can generate $\tau$ with similar average $C_{tar}(\tau)$, we believe that the trajectory distribution can be recovered. 

To answer the second question, we evaluate the complete preference-based task transfer algorithm under some customized environments and tasks. In each environment, we transfer current policy under basic task to the target one. During the transfer process, expert preference (emulated by computer) is given as a selection result only, while any information of cost or selecting rule is unknown to the agent. We also train an expert policy with PPO and $c_{tar}(s, a)$ for comparison. In each episode, we generate $\tau_i$ using our learned policy and record $C_{tar}(\tau_i)$. If the average $C_{tar}(\tau_i)$ finally approaches to $C_{tar}(\tau_{\text{PPO}})$, we can verify that our method can learn the cost function of target task.

To answer the third question, we compare our method with MAML~\cite{MAML}, a task transfer algorithm requiring accurate $c_{tar}$. We use averaged cost on target task in each episode (we consider \textit{gradient step} in MAML the same as \textit{episode} in our method) for evaluation, to see whether the result of our method is comparable. 


\subsection{Environments and Tasks}
Here we outline some specifications of the environments and tasks in our experiments:
\begin{itemize}
    \item \textbf{Hopper, Walker2d, Humanoid and Reacher:}
    These environments and tasks are directly picked from OpenAI Gym~\cite{gym} without customization. Since they are only used for functionally verifying our distribution learning part and comparing with the original GAIL algorithm, there are no transfer settings.
    \item \textbf{MountainCar, Two Peaks $\rightarrow$ One Peak:}
    In this environment, there are two peaks for the agent to climb. The basic task is to make the vehicle higher, while the target task is climbing to a specified one.
    \item \textbf{Reacher, Two Targets $\rightarrow$ Center of Targets:}
    In this environment, the agent needs to control a 2-DOF manipulator to reach some specified targets. For the basic task, there will be two targets, and the agent can reach any of them, while in target task the agent is expected to reach the central position between the two targets.
    \item \textbf{Half-Cheetah, Arbitrary $\rightarrow$ Backward:}
    In this environment, the agent needs to control a multi-joint (6) robot to move forward or backward. The two directions are all acceptable in the basic task, while only moving backward is expected in the target task.
    \item \textbf{Ant, Arbitrary $\rightarrow$ Single:}
    This environment enhances the Half-Cheetah environment in two aspects: First, there will be more joints (8) to control; Second, the robot can move to arbitrary directions. In the basic task, any directions are allowed, while only one specified direction is expected in the target task.
\end{itemize}

\begin{figure}
\begin{center}
    \hfill\includegraphics[width=0.45\linewidth]{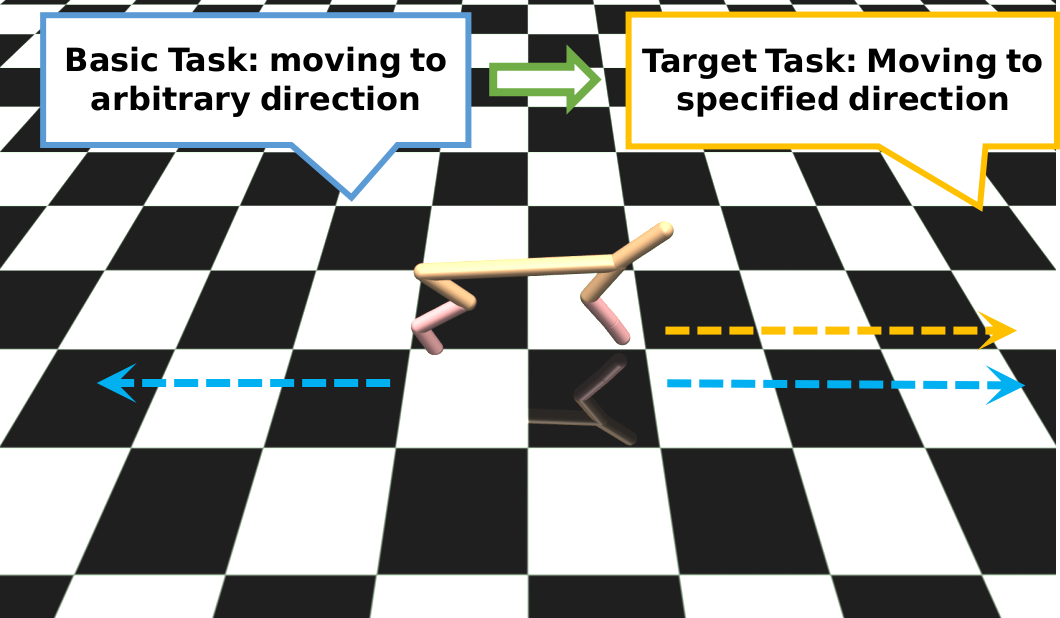}\hfill
    \includegraphics[width=0.45\linewidth]{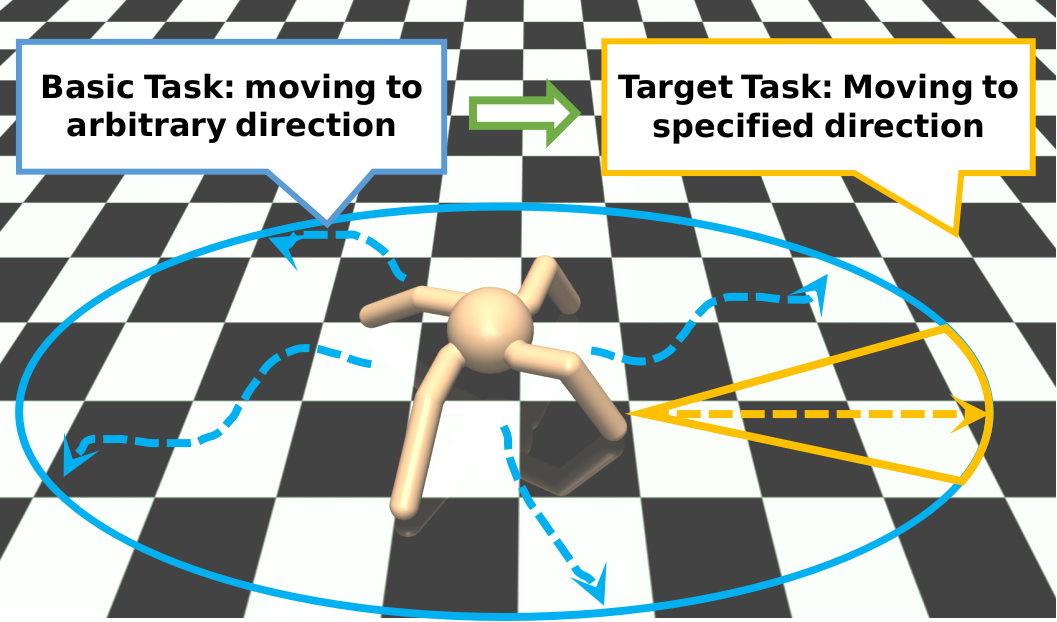}\hfill
    \includegraphics[width=1.0\linewidth]{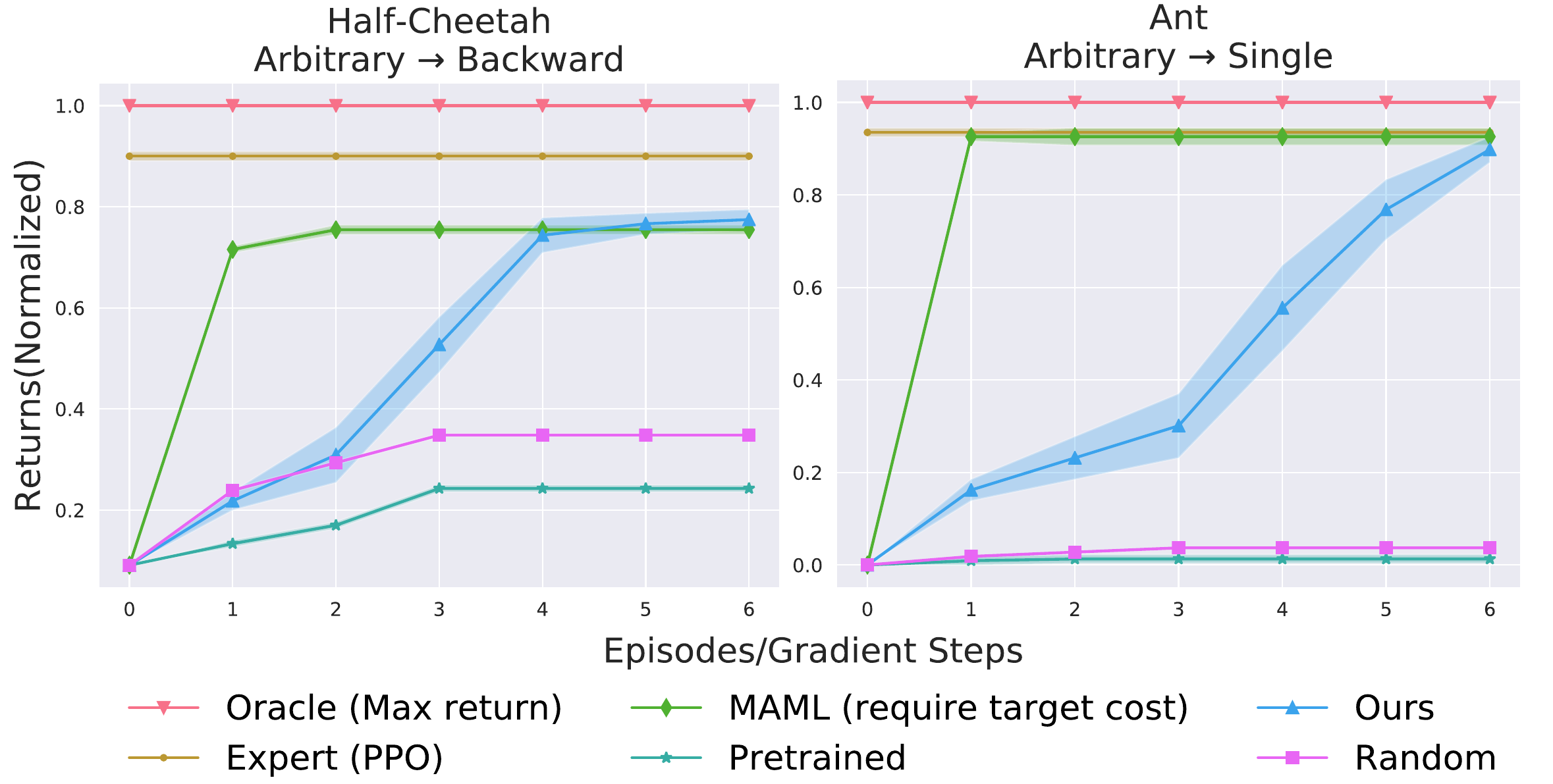}
\end{center}
   \caption{Results of comparison with other methods. We evaluate our algorithm under the transfer environments introduced by~\cite{MAML}. For the baselines, \textit{MAML} requires accurate $c_{tar}$ when transferring, \textit{Pretrained} means pre-training one policy from a basic task using Behavior Cloning~\cite{BC2} then fine-tuning. \textit{Random} means optimizing a policy from randomly initialized weights. The results show that our method can obtain a policy with comparable performance with MAML and other baselines.}
\label{ex3}
\end{figure}

\subsection{Distribution and Cost Learning}
We first concern the question whether our method can recover the trajectory distribution from demonstrations during the task transfer procedure. Experiment results are shown in Figure~\ref{ex1}. All the selected control tasks are equipped with high-dimensional continuous state and action spaces, which can be challenging to common IRL algorithms. We find that our method achieves nearly the same final performance as the expert (PPO) that provides the demonstrations, indicating that our method can recover the trajectory distribution. Also, comparing with other state-of-the-art IRL methods like GAIL, our method can learn a better trajectory distribution and a cost function more efficiently.

\subsection{Preference-based Task Transfer}
In Figure~\ref{ex2}, we demonstrate the transfer results on two environments. The transfer in Reacher environment is more difficult than MountainCar toy environment. The reason would be that the later one can be clustered easily since there are only two actual goals that a trajectory may reach, and the target goal (to reach one specified peak) is exactly one of them. In Reacher environment, although the demonstrations in the basic task still seem to be easily clustered, the target task cannot be directly derived from any of the clusters. In both two transfer experiments, the adapted policies produced by our algorithm show nearly the same performances as the experts that directly trained on these two target tasks. As the transferred policy is trained with the learned cost function, we can conclude that our algorithm can transfer to target task by learning the target cost function. In our experiments, we find that within less than 10 episodes and less than 100 querying number at each episode can sufficiently derive desired performance. Another potential improvement of our method is to apply some commutable rules to simulate the human selection and reduce the querying time. 

\subsection{Comparison with Other Methods}
We compare our method with some state-of-the-art task transfer algorithms including MAML~\cite{MAML}. Results are shown in Figure~\ref{ex3}. Half-Cheetah environment is pretty similar to MountainCar for the limited moving directions. However, its state and action space dimensions are much higher, which increase the difficulties for trajectory distribution and cost learning. Ant is the most difficult one among all the environments. Due to its unrestricted moving directions, the demonstrations on the basic task are highly entangled. The results illustrate that our method achieves comparable performances to those methods that require the accurate cost of the target task on the testing environments. Notice that, for some hard environment like Ant, our method may run for more episodes than MAML, since our algorithm only depends on preference, the results can still be convincing and impressive. 

\section{Conclusion}

In this paper, we present an algorithm that can transfer policies through learning the cost function on the target task with expert-provided preference selection results only. By modeling the preference-based selection as rejection sampling and utilizing enhanced Adversarial MaxEnt IRL for directly recovering the trajectory distribution and cost function from selection results, our algorithm can efficiently transfer policies from a related but not exactly-relevant basic task to the target one, while theoretical analysis on convergence can be provided at the same time. Comparing to other task transfer methods, our algorithm can handle the scenario in which acquiring the accurate demonstrations or cost functions from experts is inconvenient. Our results achieve comparable task transfer performances to other methods which depend on accurate costs or demonstrations. Future work could focus on the quantitative evaluation of the improvement on the transferred cost function. Also, the upper bound on the sum of total operating episodes could be analyzed. 

\section*{Acknowledgment}
This research work was jointly supported by the Natural Science Foundation great international cooperation project (Grant No:61621136008) and the National Natural Science Foundation of China (Grant No:61327809). Professor \textbf{Fuchun Sun(fcsun@tsinghua.edu.cn)} is the corresponding author of this paper and we would like to thank Tao Kong, Chao Yang and Professor Chongjie Zhang for their generous help and insightful advice.\\

\small{
\bibliographystyle{aaai}
\bibliography{reference}
}

\end{document}